\def\eqref#1{equation~\ref{#1}}
\def\1{\bm{1}}
\DeclareMathAlphabet{\mathsfit}{\encodingdefault}{\sfdefault}{m}{sl}
\SetMathAlphabet{\mathsfit}{bold}{\encodingdefault}{\sfdefault}{bx}{n}
\theoremstyle{plain}
\newtheorem{theorem}{Theorem}[section]
\theoremstyle{definition}
\theoremstyle{remark}
\title{Message-Passing GNNs Fail to Approximate \\ Sparse Triangular Factorizations}
\author{\name Vladislav Trifonov \email vladtrifono@gmail.com \\
      \addr AIC, Skoltech \\
      AI4S Center, Sberbank of Russia \\
      \AND
      \name Ekaterina Muravleva \email e.muravleva@skoltech.ru\\
      \addr AI4S Center, Sberbank of Russia \\
      AIC, Skoltech \\
      \AND
      \name Ivan Oseledets \email oseledets@airi.net \\
      \addr AIRI \\
      AIC, Skoltech}
\begin{document}

\maketitle

\begin{abstract}
Graph Neural Networks (GNNs) have been proposed as a tool for learning sparse matrix preconditioners, which are key components in accelerating linear solvers. We present theoretical and empirical evidence that message-passing GNNs are fundamentally incapable of approximating sparse triangular factorizations for classes of matrices for which high-quality preconditioners exist but require non-local dependencies. To illustrate this, we construct a set of baselines using both synthetic matrices and real-world examples from the SuiteSparse collection. Across a range of GNN architectures, including Graph Attention Networks and Graph Transformers, we observe low cosine similarity ($\leq0.7$ in key cases) between predicted and reference factors. Our theoretical and empirical results suggest that architectural innovations beyond message-passing are necessary for applying GNNs to scientific computing tasks such as matrix factorization. Moreover, experiments demonstrate that overcoming non-locality alone is insufficient. Tailored architectures are necessary to capture the required dependencies since even a completely non-local Global Graph Transformer fails to match the proposed baselines.
\end{abstract}

\section{Introduction} \label{Introduction}
Preconditioning sparse symmetric positive definite matrices is a fundamental problem in numerical linear algebra \citep{benzi2002preconditioning}.
The goal is to find a precondtioner matrix $X$ such that $X^{-1}A$ is well-conditioned. This results in faster convergence of iterative methods when solving linear systems~\citep{saad2003iterative}. A well-established choice for symmetric positive definite matrices is to use an incomplete Cholesky factorization as the direct preconditioner $X = LL^\top \approx A$. Recently, there has been significant interest in using graph neural networks (GNNs) to predict such preconditioners~\citep{li2024generative,trifonov2024learning,hausner2023neural, hausner2024learning}. The key idea is to represent the sparse matrix $A$ as a graph where edges correspond to the non-zero entries and to use GNN architectures to predict the entries of the preconditioner $X$, minimizing a certain functional.

Although GNNs show promise, we demonstrate their fundamental limitation in tasks related to solving linear systems: their locality prevents them from learning non-local preconditioners. Specifically, we demonstrate that there are classes of matrices for which optimal sparse preconditioners exist, yet they exhibit non-local dependencies. Starting with simple matrices, such as tridiagonal matrices arising from a discretization of partial differential equations (PDEs), we demonstrate that updating a single entry in $A$ can significantly affect all entries in $L$. 

To promote further development in the field, we introduce a new benchmark dataset of matrices for which optimal sparse preconditioners are known to exist but require non-local computations. We construct this dataset using both synthetic examples and real-world matrices from the SuiteSparse collection~\citep{suitesparse2024}. For the synthetic benchmarks, we carefully design tridiagonal matrices for which the Cholesky factors depend non-locally on the matrix elements by leveraging properties of $\text{rank-}1$ semiseparable matrices. For real-world problems, we explicitly compute so-called K-optimal preconditioners based on the inverse matrix with sparsity patterns matching the lower triangular part of the original matrices.

Although message-passing GNNs can naturally be applied to sparse linear algebra problems, their inherent locality poses fundamental limitations in learning non-local linear algebra operations such as preconditioner construction. Our contribution is as follows:
\begin{itemize}
    \item We demonstrate the fundamental limitations of message-passing GNNs in approximating generally non-local preconditioner matrices for sparse linear systems.
    \item We construct a benchmark dataset to validate novel architectures for GNN-based preconditioner construction routines and to validate novel non-local GNN architectures. The dataset consists of both synthetic and real-world matrices.
    The real-world matrix dataset is based on the solution to an optimization problem, resulting in optimal sparse preconditioners.
    \item We demonstrate that specific architectures are required for the task of constructing GNN-based preconditioners, since even completely non-local architectures, such as graph transformers, fail to approximate the proposed benchmark, indicating the issue is architectural mismatch, not solely insufficient receptive field.
\end{itemize}

\section{Problem Formulation}
Let $A$ be a sparse symmetric positive definite matrix. The goal is to find a sparse lower triangular matrix $L$ such that $L L^{\top}$ approximates $A$ well. In other words, the condition number of $L^{-\top} A L^{-1}$ is small. Moreover, a good preconditioner should produce a tightly clustered spectrum of $L^{-\top} A L^{-1}$. The key idea in approximating $L$ with GNNs is representing the sparse matrix $A$ as a graph where the edges correspond to the non-zero entries. Nodes can be represented as variables~\citep{li2023learning}, diagonal entries of $A$~\citep{li2024generative} or omitted~\citep{trifonov2024learning}. A GNN then processes this graph to predict the non-zero entries of $L$, preserving the sparsity pattern of $A$.

Multiple rounds of message-passing aggregate information from neighboring nodes and edges. This architecture is local, meaning that if we modify a single entry of $A$, the change will propagate only to neighboring nodes and edges. The size of this neighborhood is limited by the GNN's receptive field, which is proportional to the network's depth, i.e., the number of message-passing layers. It is worth noting that this task is a regression on edges, which appear less frequently than typical node-level or graph-level tasks, for which most classical GNNs are formulated.

In principle, Cholesky factorizations admit formulations entirely in terms of message passing. Therefore they can be expressed as message-passing algorithms within modern GNN frameworks, analogous to the examples described in~\citet{moore2025graph}. Such a message-passing realization would require network depth scaling with the problem size making the architecture impractically deep for realistic systems. Since our work focuses on learning approximate factors with GNNs without explicitly encoding the elimination procedure, we do not exploit this direct algorithmic realization.

\paragraph{Related work}

The first attempts at learning sparse factorized preconditioners used convolutional neural networks (CNNs)~\citep{sappl2019deep,ackmann2020machine,cali2023neural}. However, CNNs are not feasible for large linear systems and can only be applied efficiently with sparse convolutions. Nevertheless, sparse convolutions do not utilize the underlying graph structure of sparse matrices. Therefore, GNNs were naturally applied to better address the structure of sparse matrices, as has already been considered in several works \citep{li2023learning,trifonov2024learning,hausner2023neural,li2024generative,booth2024neural}. GNNs can also be introduced into iterative solvers from other perspectives, not just for learning sparse incomplete factorizations. For example, the authors of~\citet{chen2024graph} suggest using GNN as a nonlinear preconditioner, and in works~\citet{taghibakhshi2022learning,taghibakhshi2023mg} authors use GNNs for a domain decomposition method. Although, our work focuses on the sparse factorized preconditioners, the illustrated limitation is also relevant to the GNN-based approaches discussed above. Another line of work covers iterative solvers rather than preconditioners; see, for example,~\citet{luoneural}.

The main limitations of GNNs, such as over-smoothing~\citep{rusch2023survey}, over-squashing~\citep{topping2021understanding} and expressive power bound~\citep{xu2018powerful} have already been highlighted in modern research. Mitigating these problems is indeed an important task, but in long-range dependent graphs, the receptive field must still be really large. Furthermore, deeper stacking of graph convolution layers may indeed work well for well-known GNNs benchmarks since they are typically sparse and connected, and one can expect a small graph diameter. However, for the problems considered in this paper, the diameter of the graphs can be up to $\mathcal{O}(N)$, making deeper stacking of layers infeasible.

\subsection{Limitations of GNN-based Preconditioners}
Consider the mapping $f: A \rightarrow L$, where $A$ is a given symmetric positive definite matrix, and $L$ is a sparse lower triangular matrix with
a given sparsity pattern. In this section we will provide a an example of sparse matrices $A$, when: 
\begin{itemize}
\item $A$ is a sparse matrix and 
there exists an ideal factorization $A = L L^{\top}$, where $L$ is a sparse matrix.
\item The mapping of $A$ to $L$ is not local: a change in one entry of $A$ can significantly affect all entries of $L$, beyond the GNN's receptive field.
\end{itemize}
The simplest class of such matrices are positive definite tridiagonal matrices. These matrices appear from the standard discretization of 
one-dimensional PDEs. Such matrices are known to have bidiagonal Cholesky factorization 
\begin{equation}\label{gnn:tridiag}
   A = LL^{\top},
\end{equation}
where $L$ is bidiagonal matrix,
and that is what we are looking for: the ideal sparse factorization of a sparse matrix. Our goal is to show that the mapping \eqref{gnn:tridiag} is not local. Lets consider first the case of discretization of the Poisson equation 
on a unit interval with Dirichlet boundary conditions. The matrix $A$ is given by the second order finite difference approximation,
\begin{equation}
A = \begin{pmatrix}
2 & -1 & 0 & \cdots & 0 \\
-1 & 2 & -1 & \cdots & 0 \\
0 & -1 & 2 & \ddots & \vdots \\
\vdots & \vdots & \ddots & \ddots & -1 \\
0 & 0 & \cdots & -1 & 2
\end{pmatrix}.
\end{equation}

The Cholesky factor $L$ is bidiagonal in this case.

The question is, how do the elements of $L$ change if we change a single entry of $A$ in position $(1, 1)$? The change in the diagonal is shown in Figure~\ref{fig:tridiag} on the left, where one can observe the decay. This decay is algebraic and aligned with the properties of the Green's functions of the PDEs. For this matrix, the dependence is local. However, we can construct pathological examples where the dependence is not local – a single change in $A$ will change almost all elements of $L$.
\begin{figure*}[ht]
    \begin{center}
        \includegraphics[width=\textwidth]{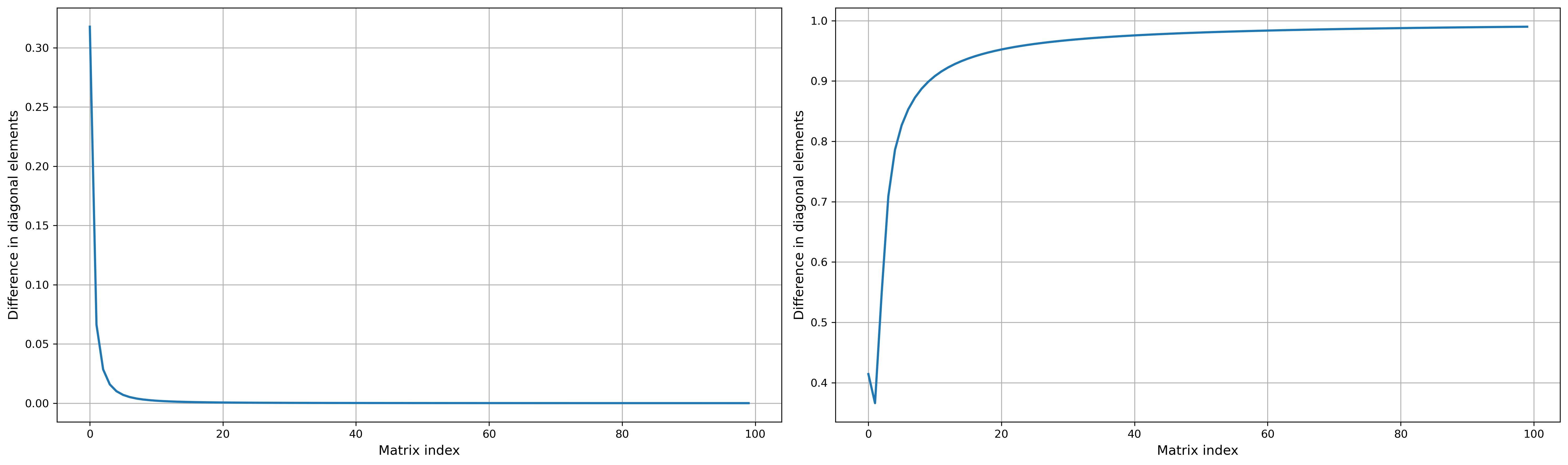}
        \caption{Difference of the diagonal elements between the Cholesky factor $L$ and perturbed factor $L'$ in a single entry $A_{11}$ of the tridiagonal matrix. \textbf{(Left)} 1D Laplacian matrix. \textbf{(Right)} Counterexample.}
        \label{fig:tridiag}
    \end{center}
\end{figure*}
\begin{theorem}
    Let $A$ be a tridiagonal symmetric positive definite $n \times n$ matrix. Then it can be factorized as
    \begin{equation*}
        A = LL^{\top},
    \end{equation*}
    where $L$ is a bidiagonal lower triangular matrix, and then mapping $A \rightarrow L$ is not local, which means 
    that there exist matrix $A$ and $A'$ such that $A-A'$ has only one non-zero element, where
    the difference $L - L'$ has dense support: many entries change significantly, even though only a single entry was perturbed.
\end{theorem}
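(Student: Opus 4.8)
The plan is to reduce the Cholesky map to a scalar nonlinear recursion on the pivots and then exhibit a family of tridiagonal matrices for which this recursion has a \emph{repelling} fixed point, so that a single perturbation is amplified rather than damped as it propagates down the factor.

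First I would write the factorization explicitly. Denoting the diagonal entries of $A$ by $d_1,\dots,d_n$ and the off-diagonal entries by $e_1,\dots,e_{n-1}$, and writing $L$ with diagonal $\ell_i$ and subdiagonal $m_i$, matching $A = LL^\top$ gives $m_i = e_i/\ell_i$ together with the pivot recursion $p_1 = d_1$, $p_{i+1} = d_{i+1} - e_i^2/p_i$, where $p_i = \ell_i^2$. Thus the whole factor is obtained by iterating a scalar map, and a perturbation of the single entry $d_1 = A_{11}$ enters only through the initial condition $p_1$. The sensitivity of the $i$-th pivot to that perturbation is the product $\prod_{j<i} \partial p_{j+1}/\partial p_j = \prod_{j<i} e_j^2/p_j^2$, so the map $A \to L$ is local precisely when this product decays and non-local when it stays bounded away from zero.

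Next I would construct the counterexample by forcing the recursion to be autonomous with a repelling fixed point. Fixing $|e_i| = 1$ and choosing $d_1 = 1/r$, $d_i = r + 1/r$ for $i \geq 2$ with $r > 1$, the map $g(p) = (r + 1/r) - 1/p$ has fixed points $p = 1/r$ and $p = r$, with $g'(1/r) = r^2 > 1$ and $g'(r) = 1/r^2 < 1$. The unperturbed matrix sits exactly at the repelling fixed point, $p_i \equiv 1/r$, so that $\ell_i \equiv 1/\sqrt r$ and $m_i \equiv \pm\sqrt r$, and this $A$ is symmetric positive definite since every pivot is positive; these are exactly the tridiagonal matrices whose inverses are rank-$1$ semiseparable, which is where the explicit family is drawn from. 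Perturbing $A_{11}$ upward to $p_1' = 1/r + \delta$ with $\delta > 0$ places the orbit in $(1/r, r)$, where $g$ is increasing and its only behavior is monotone escape from $1/r$ and convergence to the attracting fixed point $r$. Hence $p_i' \to r$, so $\ell_i' \to \sqrt r$ and both $\ell_i' - \ell_i$ and $m_i' - m_i$ converge to $\pm(\sqrt r - 1/\sqrt r) \neq 0$: the perturbation saturates rather than decays, so $L - L'$ has $\Theta(n)$ entries of order one. Positive definiteness of $A'$ is automatic, since every perturbed pivot stays in $(1/r, r)$ and is therefore positive.

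I expect the main obstacle to be the bookkeeping that turns ``does not decay'' into the precise claim of dense support. Concretely, I must control the transition length, namely the number of initial steps during which $p_i'$ is still close to $1/r$ before it saturates near $r$, and show it is $o(n)$ (indeed $O(1)$ for a constant $\delta$, or $O(\log(1/\delta))$ in general), so that a constant fraction of the entries are changed by a quantity bounded below independently of $n$. A secondary point requiring care is the sign convention on $e_i$ and the fact that downward perturbations ($\delta < 0$) can drive pivots negative and destroy positive definiteness; the statement is therefore proved with one-sided perturbations, which suffices for the existential claim.
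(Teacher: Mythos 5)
Your proposal is correct, and it follows the same overall strategy as the paper's proof: exhibit an explicit tridiagonal SPD matrix, perturb the single entry $A_{11}$, and track the scalar Cholesky pivot recursion $p_{i+1} = d_{i+1} - e_i^2/p_i$ to show that the diagonal of the factor changes by an amount bounded away from zero at essentially every index. The differences lie in the choice of counterexample and the completeness of the convergence argument. The paper takes $L_{ii} = 1/i$, $L_{i,i-1} = 1$, so the unperturbed pivots $1/i^2$ decay to zero, while the pivots of $A' = A + e_1 e_1^\top$ obey the non-autonomous recursion $d_1 = 2$, $d_i = 1 + \frac{1}{i^2} - \frac{1}{d_{i-1}(i-1)^2}$, whose convergence to $1$ is asserted rather than proved. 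You instead make the recursion autonomous, $g(p) = r + \frac{1}{r} - \frac{1}{p}$, place the unperturbed matrix exactly at the repelling fixed point $1/r$, and let the perturbed orbit escape to the attracting fixed point $r$; the monotone-bounded-orbit argument then gives a self-contained proof of convergence, and the linearization $g'(1/r) = r^2 > 1$ quantifies the transition length as $O(\log(1/\delta))$, which the paper never needs to address because it leaves the analogous step informal. Your framing also isolates the general mechanism --- the sensitivity product $\prod_j e_j^2/p_j^2$ staying bounded away from zero --- which explains why such counterexamples exist at all, rather than presenting one ad hoc. What the paper's example buys in exchange: since there $L_{ii} = 1/i \to 0$ while $L'_{ii} \to 1$, the \emph{relative} change in the entries is unbounded (a slightly stronger qualitative failure of locality), and its construction connects directly to the rank-$1$ semiseparable structure reused in the synthetic benchmark section. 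Your caveat about one-sided perturbations is not a weakness relative to the paper: the paper's perturbation is likewise the one-sided $+e_1 e_1^\top$, and the theorem only demands existence of one perturbed pair.
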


\begin{proof}
    Consider the matrix $A$ given by $A=LL^{\top}$ where $L$ is a bidiagonal matrix with $L_{ii} = \frac{1}{i}, i = 1, \ldots, n$ and
    $L_{i, i-1} = 1, i = 2$. Then $A$ is a symmetric positive definite tridiagonal matrix with elements $A_{11} = 1, A_{i, i} = 1 + \frac{1}{i^2}, 
    A_{i+1, i} = A_{i, i+1} = \frac{1}{i}, i = 1, \ldots, n-1$.
    Now, consider the matrix $A' = A + e_1 e_1^{\top}$, where $e_1$ is the first column of the identity matrix. Let
    $A' = L' L'^{\top}$ be its Cholesky factorization. The matrix $L'$ is bidiagonal. The element $L'_{11}$ is equal to $\sqrt{2}$,
    and for each $i=2, \ldots, n$ we have the well-known formulas 
    \begin{equation*}
      L'_{i, i-1} = \frac{L_{i, i-1}}{L'_{i-1, i-1}} = \frac{\frac{1}{i-1}}{L'_{i-1, i-1}},
    \end{equation*} 
    and $L'_{i, i} = \sqrt{A_{i, i} - \left( L_{i, i-1}\right)^2 }.$  Let $d_i = (L'_{i, i})^2$, then
    $d_1 = 2, d_i = 1 + \frac{1}{i^2} - \frac{1}{d_{i-1} (i-1)^2}$. From this recurrence relation it is easy to see that $d_i$
    converges to $1$ as $i \to \infty$.
\end{proof}
Consequently, for a general matrix $A$, no finite-depth message passing architecture can robustly recover $L$ under such perturbations, so empirical failures are expected. The difference between diagonal elements of $L$ and $L'$ is shown on Figure~\ref{fig:tridiag} on the right.

\section{Constructive Approach}
We will use the class of tridiagonal matrices as the basis for our synthetic benchmarks for learning triangular preconditioners. What approaches can we take for other, more general sparse positive definite matrices? In this section, we present a constructive approach to building high-quality preconditioners that cannot be represented by GNNs, as demonstrated in our numerical experiments in Section~\ref{sec:experiments}.

To accomplish this task, we draw attention to the concept of K-condition number, introduced in~\citet{kaporin1994new}. By minimizing this condition number, we can constructively build sparse preconditioners of the form $A \approx LL^{\top}$ for many matrices, where the sparsity pattern of $L$ matches that of the lower triangular part of $A$. The K-condition number of a symmetric positive definite matrix $A$ is defined as
\begin{equation}\label{gnn:kcond}
	K(A) = \frac{\frac{1}{n}\mathrm{Tr}(A)}{\left(\det(A)\right)^{1/n}}.
\end{equation}
The interpretation of \eqref{gnn:kcond} is that it represents the arithmetic mean of the eigenvalues divided by their geometric mean. For matrices with positive eigenvalues, this ratio is always greater than 1 and equals 1 only when the matrix is a multiple of the identity matrix. Given a preconditioned matrix $M$, we can evaluate its quality using $K(M)$. This metric can be used to construct \emph{incomplete factorized inverse preconditioners}~\citep{kaporin1994new, chen2024lightning} $A^{-1} \approx LL^{\top}$ where $L$ is sparse. However, we focus on constructing \emph{incomplete factorized preconditioners} $A \approx LL^{\top}$ with a sparse $L$. Therefore, we propose minimizing the functional:
\begin{equation}\label{gnn:kopt_inv}
   K(L^{\top} A^{-1} L) \rightarrow \min_L,
\end{equation}
where $L$ is a sparse lower triangular matrix with a predetermined sparsity pattern. Utilizing the inverse matrix in preconditioner optimization is a promising strategy that has been explored in other works \citep{li2023learning,trifonov2024learning} through the functional:
\begin{equation}\label{gnn:low_freq_loss}
   \Vert LL^{\top} A^{-1} - I \Vert^2_F \rightarrow \min.
\end{equation}

The distinctive advantage of the functional~\eqref{gnn:kopt_inv} is that the minimization problem can be solved \emph{explicitly} using linear algebra techniques. This allows us to construct pairs of matrices $(A_i, L_i)$ for small and medium-sized problems, where $L_i L^{\top}_i$ acts as an effective preconditioner. These pairs are valuable benchmarks for evaluating and comparing the properties of preconditioner learning algorithms against matrices that minimize \eqref{gnn:kopt_inv}.

\section{K-optimal Preconditioner Based on Inverse Matrix for Sparse Matrices}

In this section, we analyze the preconditioner quality functional:
\begin{equation}\label{prec:koptinv}
	K(L^{\top} A^{-1} L) \rightarrow \min_L,
\end{equation}
where $L$ is a sparse lower triangular matrix with predetermined sparsity pattern. We will derive an 
explicit solution to this optimization problem.

\subsection{Solution of the Optimization Problem}

Let us demonstrate how to minimize the K-condition number in the general case, 
then apply the results to obtain explicit formulas for K-optimal preconditioners. Consider the optimization problem:
\begin{equation}\label{prec:kinv}
	K(G^{\top} B G) \rightarrow \min_G,
\end{equation}
where $G$ belongs to some linear subspace of triangular matrices:
\begin{equation*}
	g = \mathrm{vec}(G) = \Psi z,
\end{equation*}
where $\Psi$ is an $n^2 \times m$ matrix, with $m$ being the subspace dimension. For sparse matrices, 
$m$ equals the number of non-zero elements in $G$.

Instead of directly minimizing functional \eqref{prec:kinv}, we minimize its logarithm:
\begin{equation*}
\Phi(G) = \log K(G^{\top} B G) = \log \frac{1}{n}\mathrm{Tr}(G^{\top} B G) - \frac{1}{n} \log \det(G)^2 - \frac{1}{n} \log \det(B).
\end{equation*}
The third term is independent of $G$ and can be omitted.
For the first term:
\begin{equation*}
	\mathrm{Tr}(G^{\top} B G) = \langle B G, G \rangle,
\end{equation*}
where $\langle \cdot, \cdot \rangle$ denotes the Frobenius inner product. Therefore:
\begin{equation*}
	\mathrm{Tr}(G^{\top} B G) = (\mathcal{B} g, g), 
\end{equation*}
with $\mathcal{B} = I \otimes B$, leading to:
\begin{equation*}
	\mathrm{Tr}(G^{\top} B G) = (\mathcal{B} g, g) = (\Psi^{\top} \mathcal{B} \Psi z, z) = (Cz, z),
\end{equation*}
where $C = \Psi^{\top} \mathcal{B} \Psi$.
To express the elements of matrix $C$, we use three indices for $\Psi$'s elements, $\Psi_{ii' l}$:
\begin{equation*}
	C_{ll'} = \sum_{i,j=1}^n B_{ij} \sum_{i'} \Psi_{ii' l} \Psi_{j i' l} = \langle B, \Psi_l \Psi^{\top}_{l'} \rangle,
\end{equation*}
where $\Psi_l, l = 1, \ldots, m$ are $n \times n$ matrices obtained from corresponding rows of $\Psi$.
Our task reduces to minimizing with respect to $z$.
Since $B$ is symmetric, $C$ is also symmetric, yielding the gradient:

$$\big(\nabla \Phi(z)\big)_j = \frac{2 (Cz)_j }{(Cz, z)} - \frac{2}{n} \mathrm{Tr}({G^{-1} \Psi_j}),$$
derived using the formula for the logarithm of matrix determinant derivative.

\paragraph{Special case: $G = L$ is a sparse matrix}

If $G=L$, where $L$ is a sparse lower triangular matrix, then matrix $C$ is a block-diagonal matrix of the form
\begin{equation*}
C = \begin{pmatrix}
C_1 & & & \\
& C_2 & & \\
& & \ddots & \\
& & & C_n
\end{pmatrix},
\end{equation*}
where blocks $C_i$ are given by formulas
\begin{equation*}
(C_i)_{kl} = B_{s^{(i)}_k, s^{(i)}_l},
\end{equation*}
where $s^{(i)}_k$ are indices of non-zero elements in the $i$-th column of matrix $L$, and
matrix $G^{-1} \Psi_j$ has non-zero diagonal elements only for $j$ corresponding to diagonal elements of matrix $L$. 
For these elements
$\mathrm{Tr}(G^{-1} \Psi_j) = \frac{1}{g_{ii}}, i = 1, \ldots, n$.

The problem reduces to $n$ independent optimization problems on values of non-zero elements in 
the $i$-th column of matrix $L$, $i=1, \ldots, n$.
Let us consider each subproblem separately. The optimality condition for the $i$-th subproblem has the form

\begin{equation*}
	 C_i z_i = \gamma_i e_1,
\end{equation*}
where $\gamma_i = \gamma_0 \frac{(Cz, z)}{g_{ii}}$ is a number, $\gamma_0$ is a constant that does not depend on $z$, 
$e_1$ is the first column of the identity matrix of corresponding size.
Hence 
\begin{equation*}
	z_i = \gamma_i v_i, \quad v_i = C_i^{-1} e_1,
\end{equation*}
and using the fact that $K$ does not depend on multiplication by a number we get an equation 
for the first component of vector $z$ (which is
the diagonal element of matrix $L$)
\begin{equation*}
	(z_i)_1 = \frac{(v_i)_1}{(z_i)_1},
\end{equation*}
from which
\begin{equation*}
	(z_i)_1 = \sqrt{(v_i)_1}.
\end{equation*}
The vector $z_i$ contains the non-zero elements of $i$-th column of $L$.
Therefore, the algorithm for finding the sparse lower triangular matrix $L$ is summarized in Algorithm \ref{alg:koptinv}. We refer to preconditioners constructed using this approach as \emph{K-optimal preconditioners}.

\begin{algorithm}[H]
\caption{Construction of K-optimal preconditioner}
\label{alg:koptinv}
\begin{algorithmic}[1]
\Require Symmetric positive definite matrix $A$, sparsity pattern for $L$
\Ensure Lower triangular matrix $L$
\State Compute $B = A^{-1}$
\For{$i = 1$ to $n$}
    \State Find indices $s_i$ of non-zero elements in column $i$ of $L$
    \State Extract submatrix $B_i$ using rows and columns from $s_i$
    \State Compute $v_i = B_i^{-1}e_1$ \Comment{$e_1$ is first unit vector}
    \State Set $L_{s_i,i} = ((v_i)_1)^{-1/2} \cdot v_i$ \Comment{Store as $i$-th column of $L$}
\EndFor
\end{algorithmic}
\end{algorithm}



\section{Benchmark Construction}

To robustly evaluate the fundamental limitations of message-passing GNNs for sparse factorization, we constructed a two-part benchmark. One part is based on synthetic matrices engineered to have non-local dependencies. The other part is drawn from real-world sparse matrix problems. This dual approach ensures our benchmark covers both theoretically motivated and practically challenging instances.

\subsection{Synthetic Benchmark}
\label{sec:synth_bench}

Our synthetic benchmark targets the core locality limitation of GNNs in a controlled setting. We generate tridiagonal matrices whose Cholesky factors $L$ are sparse but whose construction requires non-local information.
Through empirical investigation, we found that simply fixing the diagonal elements of $L$ to $1$ and sampling the off-diagonal elements from a normal distribution does not produce the desired non-local behavior, as the resulting matrices $A = LL^{\top}$ tend to exhibit primarily local dependencies. Our key insight is that non-locality emerges when the inverse matrix $L^{-1}$ is dense.

We exploit the property that the inverse of a bidiagonal matrix $L$ is rank-1 semiseparable, 
where the elements are given by the formula $L^{-1}_{i,j} = u_i v_j$ for $i \leq j$, representing part of a rank-1 matrix. This relationship is bidirectional, meaning that given vectors $u$ and $v$, we can construct $L^{-1}$ with this structure and then compute $L$ as its inverse. Our benchmark generation process exploits this property by randomly sampling appropriate vectors $u$ and $v$ to create matrices with guaranteed non-local dependencies.
Poor performance on this benchmark would raise serious concerns about the fundamental suitability of current GNN architectures for matrix factorization tasks.

\subsection{Matrices from the SuiteSparse Collection}

To complement synthetic cases with practical relevance, we curated a collection of $150$ symmetric positive definite matrices from the SuiteSparse matrix collection~\citep{suitesparse2024} for which dense inverse computation was feasible.
For each, we computed K-optimal preconditioners by explicitly solving the optimization problem~\eqref{prec:kinv} under a fixed sparsity pattern matching the lower triangle of the original matrix, 
similar to incomplete Cholesky with zero fill-in (IC(0)) preconditioners~\citep{saad2003iterative}. Our experimental results showed that K-optimal preconditioners generally outperformed traditional IC(0) preconditioners. In many cases, IC(0) either did not exist or required excessive iterations ($>10000$) for convergence. However, we observed that for a small subset of matrices, IC(0) achieved better convergence rates.

The final benchmark consists of $(A_i, L_i)$ pairs, where each $A_i$ comes from SuiteSparse and $L_i$ represents factor of superior performing preconditioner, either IC(0) or K-optimal preconditioner. Note that constructing the K-optimal preconditioner requires access to the $A_i^{-1}$ so it is used here as an offline benchmark rather than as a practical method. Matrices where neither approach succeeded were excluded to ensure meaningful evaluation.
The relative performance distribution between the K-optimal and IC(0) preconditioners is visualized in Figure~\ref{fig:compare_precs}. The K-optimal preconditioners are generally superior, though there are also cases in which IC(0) remains competitive or in which one or both methods fail to converge.

In our experiments, we consider only matrices for which a high-quality sparse preconditioner is known to exist—i.e., cases where the K-optimal method succeeds—so that the task remains feasible in principle and negative GNN results are meaningful.

\begin{figure*}[!h]
\normalsize
\centering\textbf{}
    \begin{center}
        \includegraphics[width=0.45\textwidth]{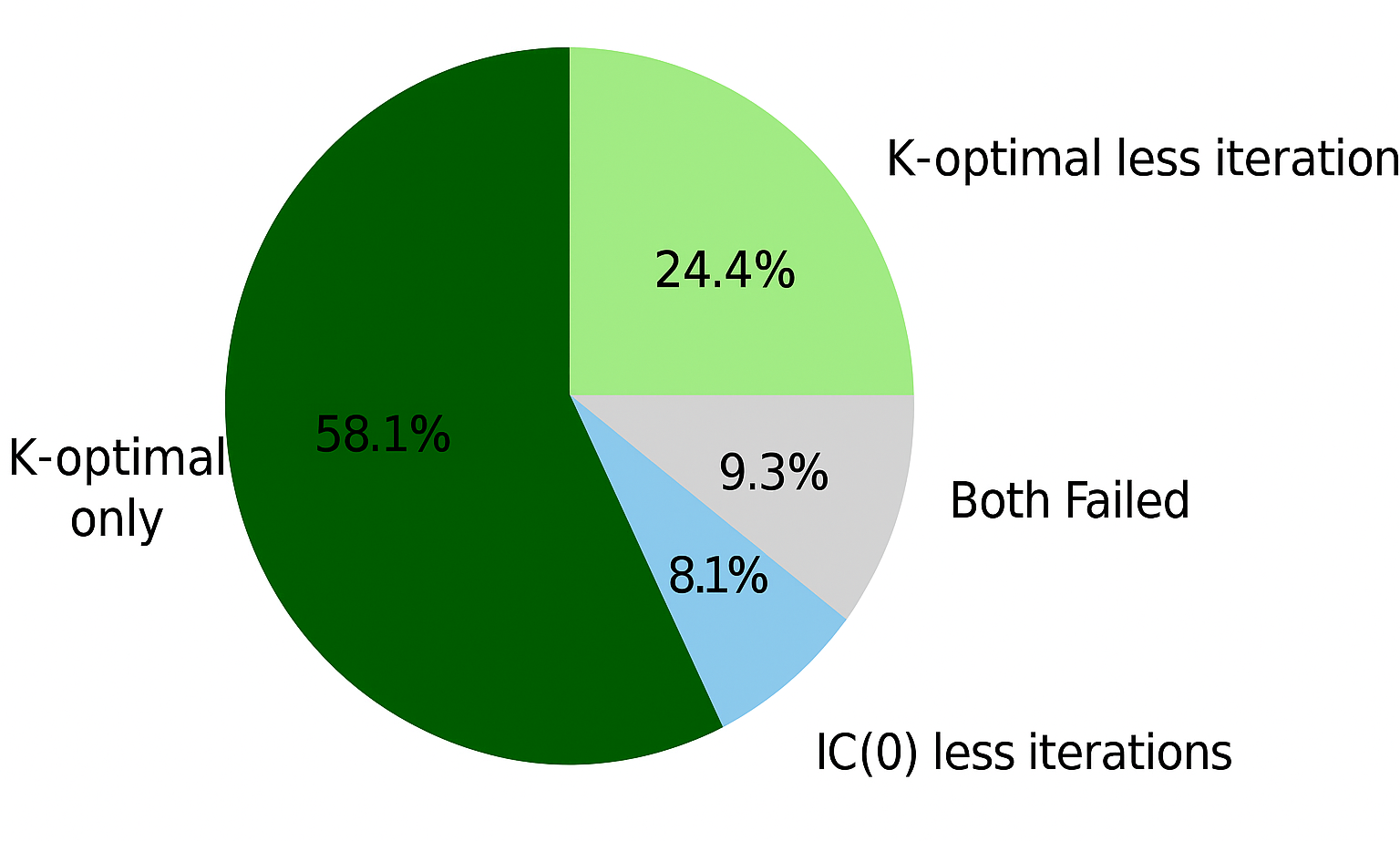}
    \end{center}
    \caption{The performance of K-optimal preconditioner and IC(0) preconditioner during solution of SuiteSparse subset. \textbf{K-optimal only}: IC(0) preconditioner failed.
    \textbf{K-optimal less iterations}: both preconditioners were successful, with superior K-optimal performance. \textbf{IC(0) less iterations}: both preconditioners were successful, with superior IC(0) performance. \textbf{Both failed}: both preconditioners failed.}
    \label{fig:compare_precs}
    \vspace*{4pt}
\end{figure*}

\section{Experiments}
\label{sec:experiments}

\subsection{GNN Architectures}

Most classical GNNs either do not consider edges (e.g., GraphSAGE~\citep{hamilton2017inductive}) or consider them as scalar weighted adjacency matrix (e.g., Graph attention network~\citep{velivckovic2017graph}). To enable edge updates during message-passing we use a Graph Network~\citep{battaglia2018relational} block as a message-passing layer.

To validate GNNs on the proposed benchmarks, we use the popular Encoder-Processor-Decoder configuration. The Encoder consists of two separate MLPs, one for nodes and one for edges. The processor consists of multiple blocks of Graph Networks. First, the Graph Network updates the edge representations with Edge Model. Then, the nodes are update by the Node Model with a message-passing mechanism. In our work, we do not consider models that achieve a larger receptive field through graph coarsening or updates to graph-level information. Hence, the Global Model in the Graph Network is omitted. The decoder is a single MLP that decodes the hidden representations of the edges into a single value per edge in resulting factors $L$.

Graph coarsening strategies in GNN architectures can be naturally related to classical domain decomposition and multigrid methods. Moreover, these methods are well represented in the application of deep learning to numerical methods, e.g.,  works~\citet{taghibakhshi2022learning, taghibakhshi2023mg, katrutsa2020black, luz2020learning}. We intentionally omit coarsening techniques since their use would significantly shift the process of training incomplete factors toward other methods from classical preconditioner approaches (e.g., multigrid). Instead, to explicitly incorporate global information propagation within our setting, we employ the popular virtual node mechanism~\citep{southern2024understanding}. This involves adding an extra node to the graph that is connected to every other node.

As a neural network baseline that does not propagate information between nodes, we use a simple two-layer MLP as the Node Model in Graph Network (MLPNodeModel). The following message-passing GNNs are used as the Node Model: (i) graph attention network v2 (GAT)~\citep{brody2021attentive}, (ii) generalized aggregation network (GEN)~\citep{li2020deepergcn} and (iii) message-passing (MessagePassingMLP)~\citep{gilmer2017neural} with two MLPs $f_{\theta_1}$ and $f_{\theta_2}$:
\begin{equation*}
    h_{i} = f_{\theta_2}\bigg(h_i, \frac{1}{N}\sum_{j\in\mathcal{N}(i)}f_{\theta_1}\big(h_i, e_{ij}\big)\bigg),
\end{equation*}

Finally, we tested two graph transformers as Node Models: (i) graph transformer operator (GraphTransformer) from~\citet{shi2020masked} and (ii) global graph transformer operator (GlobalGraphTransformer) from~\citet{wu2023sgformer}. The GlobalGraphTransformer integrates an all-pair node Attention Network (AN) with a Graph Network (GN) that preserves local graph structure:
\begin{equation*}
    \label{eq:sgformer_update}
    \mathbf{Z}_{\text{out}}
    = (1 - \alpha) \, \mathrm{AN}(\mathbf{Z}^{(0)}) 
    + \alpha \, \mathrm{GN}(\mathbf{Z}^{(0)}, \mathbf{A})\, ,
\end{equation*}
where $\mathbf{Z}^{(0)}$ denotes the input node features and $\mathbf{A}$ the graph adjacency matrix. Following the original paper’s design~\citep{wu2023sgformer}, we employ a simple graph convolution layer as the GN component (graph attention network v2 in ours experiments) and fix the mixing coefficient at $\alpha=0.5$ in all experiments.

Note that the GlobalGraphTransformer achieves global all-pair attention with quadratic complexity in the number of nodes. This can be prohibitively expensive since medium-sized linear problems are of the size $N=10^{8}$.

In our experiments, we set the encoders for nodes and edges to two layer MLPs with $16$ hidden and output features. The Node Model is single-layer model from a following list: MLPNodeModel, GAT, GEN, MessagePassingMLP, GraphTransformer, GlobalGraphTransformer. The Edge Model is a two-layer MLP with $16$ hidden features. The Node Model and Edge Model form the Graph Network, which is used to combine multiple message-passing layers in the Processor. The Edge decoder is a two-layer MLP with $16$ hidden features and a single output feature. We use the all-ones vector $[1, \dots, 1]^\top \in \mathbb{R}^N$ as the input node features, which was shown to be effective in~\citet{trifonov2024learning}.

We trained the model for $300$ epochs, reaching the convergence in each experiment. We used an initial learning rate of $10^{-3}$ and decreased it by a factor of $0.6$ every $50$ epochs. We also used early stopping with $50$ epoch patience. For the synthetic dataset, we generated $1000$ training samples and $200$ test samples. The batch size was $16$ and $8$ for the synthetic and K-optimal datasets respectively. 

We use the PyTorch Geometric~\citep{Fey-Lenssen-2019} framework for training GNNs with main layers implementation. For the GlobalGraphTransformer, we use the source implementation from~\citet{wu2023sgformer}. We used a single Nvidia A40 48Gb GPU for training.

To examine the importance of receptive field, we experiment with various numbers of layers within the Processor block. The maximum depth of the message-passing layers within the Processor block varies across different Node Models and is determined by GPU memory allocation for each Node Model but not greater than $7$ layers.

\subsection{Loss Function}

For the GNN-based preconditioner, we can use training with objectives~\eqref{gnn:kopt_inv} and \eqref{gnn:low_freq_loss} to approximate true factor $L$ with $L(\theta)$ obtained by GNNs. Fortunately, we can avoid this since the optimization problem has an exact solution. Given the optimal factors $L$, the problem we want to solve is regression on the edges of the graph, for which the most natural loss is the $\mathcal{L}_2$ loss $\| L - L(\theta) \|_2$. However, we observed unstable training in our experiments and we could not achieve convergence with such a loss. Therefore, we switched to cosine similarity loss, which is well-suited to preconditioner learning. For sparse factorized preconditioners, the matrix $L$ is defined up to a scaling factor. For both preconditioners $X = LL^T$ and $\tilde{X} = (\alpha L)(\alpha L)^\top$ it is easy to show that the preconditioned systems $\tilde{X}^{-1} A x = \tilde{X}^{-1} b$ will be equivalent since $\tilde{X}^{-1}=\frac{1}{\alpha^2} X^{-1}$ for $\alpha \neq 0$.

In general, a good preconditioner does not need to be an exact approximation of $A$, since iterative methods depend on the spectral properties of $X^{-1}A$. Besides empirical stability, the cosine similarity loss only penalizes mismatched direction but not mismatched scale. This allows for more possible variants of $L(\theta)$ and eases training. We expect a good approximation of $L$ in cosine similarity to be very close to $1$.

\begin{figure*}[!h]
  \normalsize
  \centering
      \begin{center}
          \includegraphics[width=.8\textwidth]{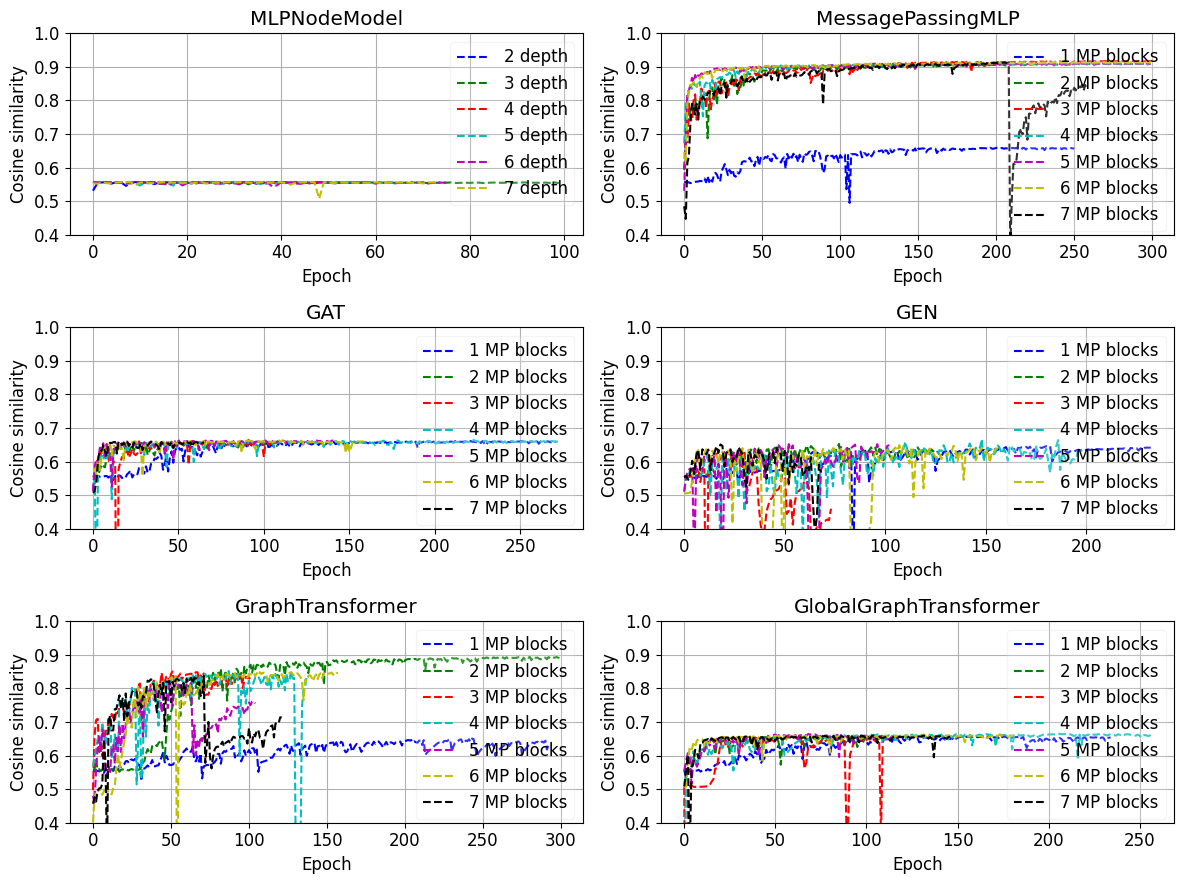}
      \end{center}
      \caption{Experiments on the synthetic dataset. Cosine similarity between true $L$ and predicted $L(\theta)$ factors of preconditioners during training. Higher is better.}
      \label{fig:synth}
      \vspace*{4pt}
  \end{figure*}

\subsection{Learning Triangular Factorization}

\paragraph{Synthetic tridiagonal benchmark} We begin our experiments with a synthetic benchmark, which is generated as described in Section~\ref{sec:synth_bench}. The modified training pairs $(A^m_i,L^m_i)$ are obtained as follows:
\begin{equation}
    A^m_i = A_i + e_1e_1^\top,~~L^m_i=\text{chol}(A^m_i)~.
\end{equation}
where $\text{chol}$ is a Cholesky factorization.

A trivial empirical justification of the non-local behaviour of the considered problem is performed using a deep feed-forward network, MLPNodeModel, which has no information about the neighbourhood context (Figure~\ref{fig:synth}). Surprisingly, the classical graph network layers GAT and GEN achieve slightly higher final accuracy than MLPNodeModel. We assume that this behaviour is explained by the fact that these architectures are not designed to properly pass edge-level information, a primary goal of our work. The GNN with MessagePassingMLP layer, on the other hand, directly uses edge features, allowing it to produce satisfactory results with a number of rounds greater than one.

The GlobalGraphTransformer is designed for global all-pair attention. However, one can observe that straightforward global information propagation not necessarily lead to proper preconditioner approximation. Even global all-pair attention via node features does not enable the GlobalGraphTransformer to learn a good triangular factorization.

\paragraph{K-optimal preconditioners on SuiteSparse subset} Experiments with K-optimal preconditioners (Figure~\ref{fig:k_optimal}) show that none of the models can achieve a higher accuracy than the baseline feed-forward network, except MessagePassingMLP. However, MessagePassingMLP performs slightly better than the baseline model. Although GAT, GEN and GlobalGraphTransformer do not explicitly use edge features in their layers, information should still propagate through the sender-receiver connection in the edge model. However, we did not observe any satisfactory approximation with these models.


\begin{figure*}[!h]
\normalsize
\centering
    \begin{center}
        \includegraphics[width=.8\textwidth]{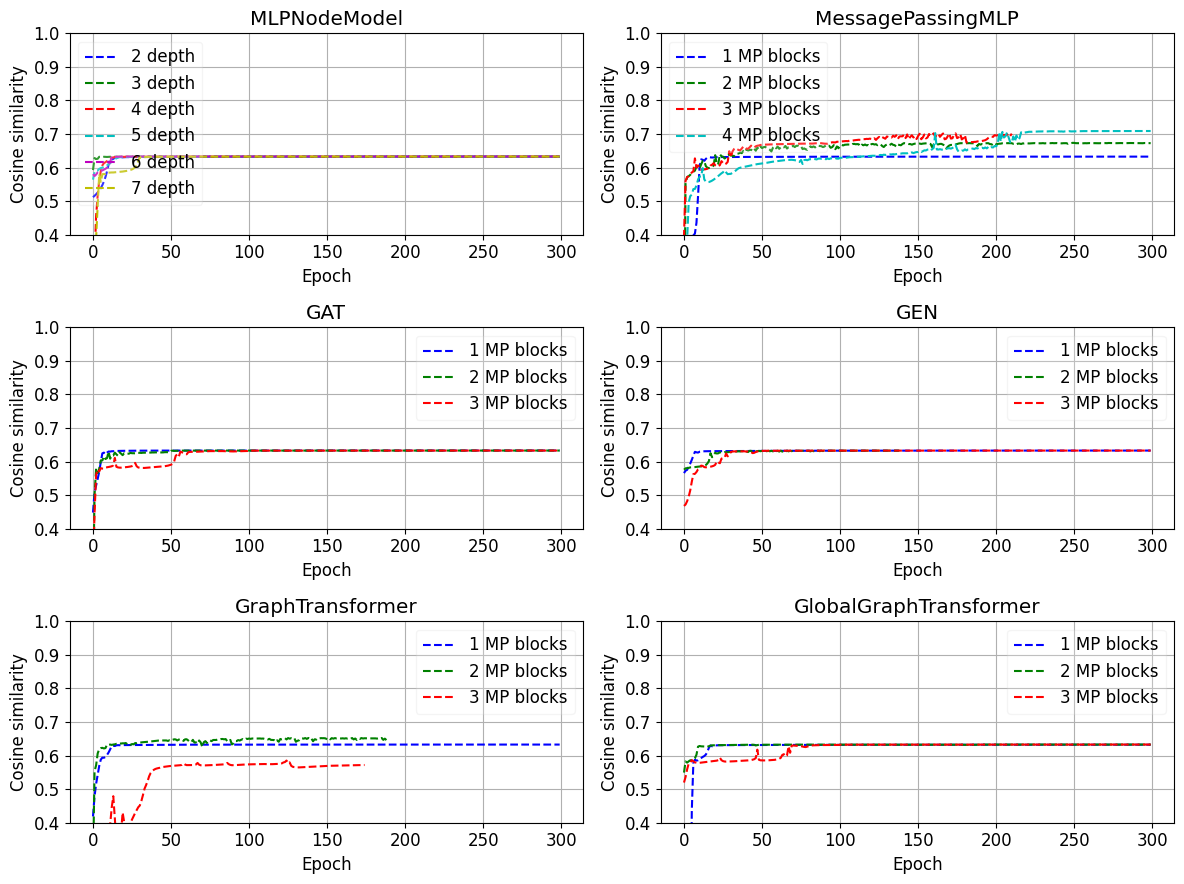}
    \end{center}
    \caption{Experiments on the K-optimal preconditioners for the SuiteSparse subset. Cosine similarity between true $L$ and predicted $L(\theta)$ factors of preconditioners during training. Higher is better.}
    \label{fig:k_optimal}
    \vspace*{4pt}
\end{figure*}

\paragraph{Experiments with Virtual Node}
We also explored a simple yet popular form of explicit global information propagation by augmenting the graph with a virtual node, i.e., an additional node connected to every other node in the graph. This mechanism is often used in graph learning as a way to aggregate and redistribute global context. In our setting, however, it introduces a strong bottleneck: heterogeneous and multiscale features of the operator must pass through a single low-dimensional representation, which is closely related to the oversquashing phenomenon discussed in the GNN literature. Consistent with this intuition, our experiments show that adding a virtual node does not improve the quality of the learned preconditioners, while substantially increasing the number of edges and thus the computational cost. These results also suggest that naive global aggregation is insufficient to overcome the non-local dependencies inherent in high-quality triangular factors. The corresponding experimental results are reported in Appendix~\ref{app:virtual_node}.

\paragraph{Order-dependence of the GNN-based preconditioners}
While a standard message-passing GNNs are permutation equivariant when applied to edge features of $A$~\citep{satorras2021n}, the overall mapping from $A$ to a triangular factor is inherently order-dependent. In our implementation, the GNN predicts values on the sparsity pattern of $A$, and we then construct the factor by retaining only $i \ge j$. It does not commute with arbitrary permutations. In general,
\begin{equation*}
    \mathrm{tril}(\mathrm{GNN}(P^{\top} A P)) \neq P^{\top} \mathrm{tril}(\mathrm{GNN}(A)) P\,.
\end{equation*}
Consequently, the learned preconditioner should be interpreted as order-dependent. This behavior is consistent with classical incomplete factorizations, which routinely employ reorderings to reduce fill to improve effectiveness. Rather than viewing the lack of strict permutation equivariance as a limitation, it motivates future work on incorporating ordering information more explicitly. The corresponding experimental results with positional encodings as in work~\citet{hausner2024learning} are reported in Appendix~\ref{app:pos_encoding}.

\section{Discussion}

We have demonstrated that GNNs cannot recover the Cholesky factors for tridiagonal matrices, for which perfect sparse preconditioners exist. Under fully controlled conditions—where exact factorizations exist for synthetic matrices—the models failed to approximate the preconditioners. For the real-world matrices, one could argue that the GNNs can learn better preconditioners then K-optimal or IC(0) preconditioners in terms of quality. This is a subject of future work, but we believe that current benchmarks and the quality of computed preconditioners are quite challenging for state-of-the-art methods, even when using functionals other than cosine similarity. This claim is supported by the fact that current GNN-based preconditioners often do not outperform their classical analogues in terms of their effect on the spectrum of $A$ and are usually not universal.

For problems whose graphs can have very large diameter (up to $\mathcal{O}(N)$), an exact simulation of the underlying algorithm with message-passing would in principle require the same order of depth. In this regime, depths $r$ equals to $1$ and $7$ are both very small compared to what would be needed to show GNN the entire graph. There is no reason to expect performance to improve smoothly with depth. Instead, we observe an almost flat plateau corresponding to the best achievable local approximation, which persists until the depth becomes comparable to the graph diameter. In other words, we are likely approximating a function that does not admit increasingly accurate local approximations. Further work includes understanding the effective depth required by novel architectures, possibly with different aggregation mechanisms or global memory, to break the quality barrier we observe in our experiments.

A natural question is whether these limitations could be overcome by using deeper GNNs or architectures with global information propagation. However, our results show that even graph transformers with all-pairs attention and virtual node do not resolve the underlying non-locality barrier. This suggests that the problem is not only an insufficient receptive field, but also requires a dedicated architecture and shifts problem to designing architectures with the right algebraic inductive bias. The locality bias of message-passing is fundamentally incompatible with the non-local structure of sparse preconditioners.

These findings have two important implications. First, claims about GNNs’ ability to learn matrix factorizations for scientific computing must be tempered by these structural limits. Second, progress will likely require architectural innovations that go beyond standard message-passing, potentially drawing insights from numerical linear algebra.

Another important design choice in classical sparse factorization is matrix reordering~\citep{amestoy1996approximate, davis2004column}. Symmetric permutations can dramatically reduce fill-in and improve the structure of the elimination tree. In the present study, we keep the ordering fixed and focus on the difficulty of learning preconditioners for a given sparsity pattern. For our synthetic benchmark, the sparsity pattern is tridiagonal and the fill-in is already minimal by construction, so standard bandwidth-reducing permutations cannot further improve the structure. For real-world matrices, however, reorderings may alter the effective graph distances over which crucial dependencies propagate. Exploring the interaction between reordering strategies and learned preconditioners is an interesting direction for future work that could complement the limitations identified in this study.

While our experiments show that standard message-passing GNNs struggle to learn high-quality incomplete factorizations in the regimes we study, these findings should not be interpreted as a general failure of neural networks in scientific computing. The practical success of neural network-based solvers depends on many additional factors, including the structure of the matrix family under consideration, numerical conditioning, architectural and implementation choices, training strategies, etc. A deeper investigation of these aspects represents an important direction for future research in applying deep learning to numerical mathematics.

\paragraph{Limitations} 
We restricted our attention to symmetric positive definite matrices. The section on the tridiagonal matrices remains unchanged, but the K-optimality does not apply to non-symmetric matrices. Hence, other approaches are necessary for constructing the corresponding  benchmarks. We will address this issue in future work.

\bibliography{main}
\bibliographystyle{tmlr}

\newpage
\appendix
\section{Experimental Results with Virtual Node}
\label{app:virtual_node}

\begin{figure*}[!h]
    \normalsize
    \centering
        \begin{center}
            \includegraphics[width=.8\textwidth]{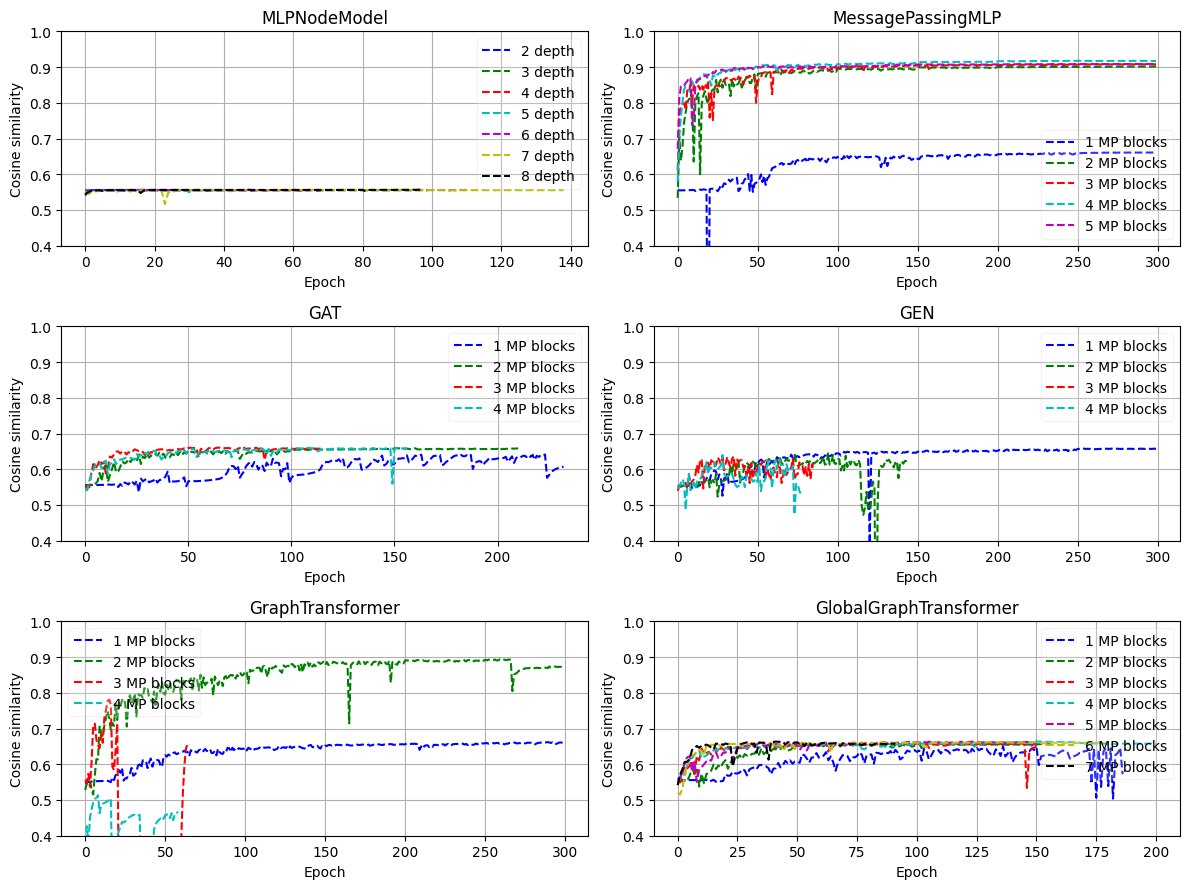}
        \end{center}
        \caption{Experiments with virtual node on the synthetic dataset. Cosine similarity between true $L$ and predicted $L(\theta)$ factors of preconditioners during training. Higher is better.}
        \label{fig:synth_virtual_node}
        \vspace*{4pt}
\end{figure*}

\begin{figure*}[!t]
    \normalsize
    \centering
        \includegraphics[width=.8\textwidth]{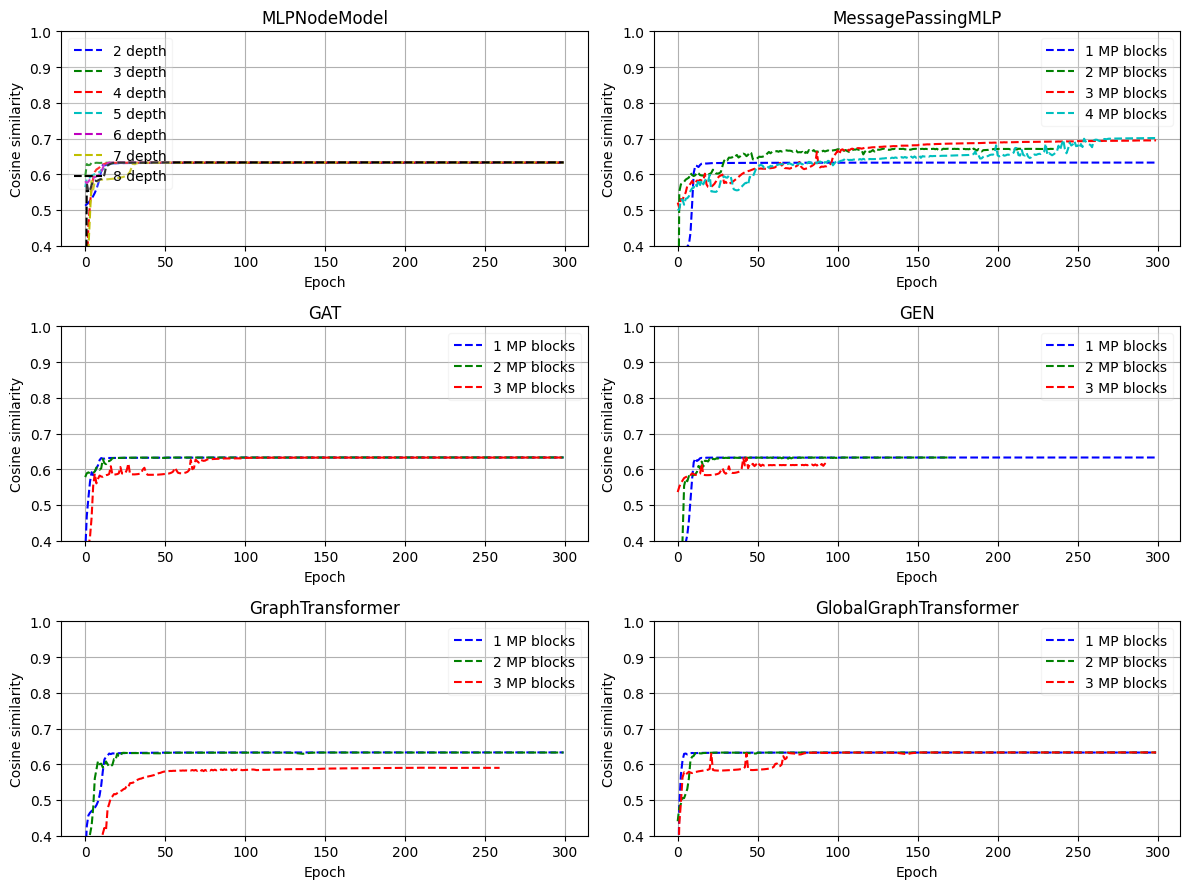}
        \caption{Experiments with virtual node on the K-optimal preconditioners for the SuiteSparse subset. Cosine similarity between true $L$ and predicted $L(\theta)$ factors of preconditioners during training. Higher is better.}
        \label{fig:k_optimal_virtual_node}
        \vspace*{4pt}
\end{figure*}

\FloatBarrier
\section{Experimental Results with Positional Encodings}
\label{app:pos_encoding}

The positional encoding is an additional edge feature that has values $-1$ and $+1$ for the lower and upper triangular part of the matrix respectively~\citep{hausner2024learning}.

\begin{figure*}[!h]
    \normalsize
    \centering
        \includegraphics[width=.8\textwidth]{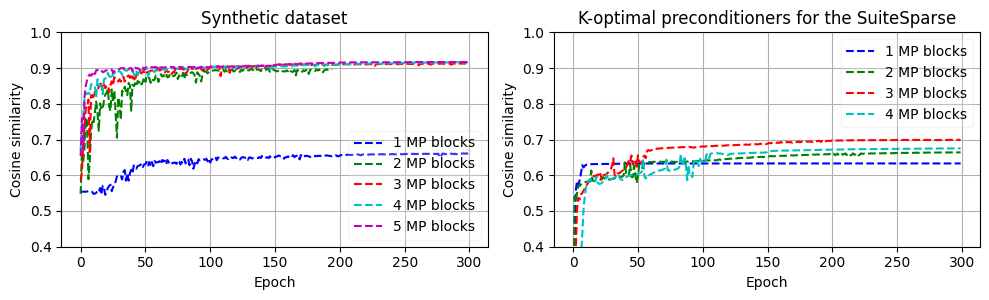}
        \caption{Experiments with positional encodings on the MessgaePassingMLP model. Cosine similarity between true $L$ and predicted $L(\theta)$ factors of preconditioners during training. Higher is better.}
        \label{fig:pos_encoding_MessagePassingMLP}
        \vspace*{4pt}
\end{figure*}

\end{document}